\newcommand{\InOut}{\SetKwInOut{Input}{Input}\SetKwInOut{Output}{Output}}
\appto\TPTnoteSettings{\footnotesize}
\newcounter{loopc}
\NewDocumentCommand\towrite{O{1}m}%
  {{\color{red}#2\forloop{loopc}{1}{\value{loopc} < #1}{; #2}}}
\DeclareMathOperator*{\argmax}{arg\,max}
\DeclareMathOperator*{\argmin}{arg\,min}
\def\BibTeX{{\rm B\kern-.05em{\sc i\kern-.025em b}\kern-.08em
    T\kern-.1667em\lower.7ex\hbox{E}\kern-.125emX}}
\acrodef{RAPID}[RAPID]{\underline{R}educing s\underline{A}mples by \underline{P}runing of \underline{I}nlier \underline{D}ensities}
\newcommand{\OurSamplingMethod}{\acs{RAPID}\xspace}
\newcommand*{\constraintref}[1]{\hyperref[#1]{Constraint~\ref*{#1}}}
\newcommand*{\inequalityref}[1]{\hyperref[#1]{Inequality~\ref*{#1}}}
\newcommand*{\opref}[1]{\hyperref[#1]{Optimization Problem~\ref*{#1}}}
\newcommand{\website}{\url{https://www.ipd.kit.edu/ocs}\xspace}
\newtheorem{definition}{Definition}
\newtheorem{characteristic}{Characteristic}
\newtheorem{theorem}{Theorem}
\newtheorem{lemma}{Lemma}
\theoremstyle{definition}
\newtheorem{remark}{Remark}
\begin{document}
\bstctlcite{IEEEexample:BSTcontrol}

\title{Efficient SVDD Sampling with Approximation Guarantees for the Decision Boundary}

% https://tex.stackexchange.com/questions/458204/ieeetran-document-class-how-to-align-five-authors-properly
\makeatletter
\newcommand{\linebreakand}{%
	\end{@IEEEauthorhalign}
	\hfill\mbox{}\par
	\mbox{}\hfill\begin{@IEEEauthorhalign}
}
\makeatother

\author{\IEEEauthorblockN{Adrian Englhardt}
\IEEEauthorblockA{\textit{Karlsruhe Institute of Technology} \\
adrian.englhardt@kit.edu}
\and
\IEEEauthorblockN{Holger Trittenbach}
\IEEEauthorblockA{\textit{Karlsruhe Institute of Technology} \\
holger.trittenbach@kit.edu}
\and
\IEEEauthorblockN{Daniel Kottke}
\IEEEauthorblockA{\textit{University of Kassel} \\
daniel.kottke@uni-kassel.de}
\linebreakand
\IEEEauthorblockN{Bernhard Sick}
\IEEEauthorblockA{\textit{University of Kassel} \\
bsick@uni-kassel.de}
\and
\IEEEauthorblockN{Klemens B\"ohm}
\IEEEauthorblockA{\textit{Karlsruhe Institute of Technology} \\
klemens.boehm@kit.edu}
}

\maketitle
\thispagestyle{plain}
\pagestyle{plain}

\begin{abstract}

Support Vector Data Description (SVDD) is a popular one-class classifiers for anomaly and novelty detection.
But despite its effectiveness, SVDD does not scale well with data size.
To avoid prohibitive training times, sampling methods select small subsets of the training data on which SVDD trains a decision boundary hopefully equivalent to the one obtained on the full data set.
According to the literature, a good sample should therefore contain so-called boundary observations that SVDD would select as support vectors on the full data set.
However, non-boundary observations also are essential to not fragment contiguous inlier regions and avoid poor classification accuracy.
Other aspects, such as selecting a sufficiently representative sample, are important as well.
But existing sampling methods largely overlook them, resulting in poor classification accuracy.

In this article, we study how to select a sample considering these points.
Our approach is to frame SVDD sampling as an optimization problem, where constraints guarantee that sampling indeed approximates the original decision boundary.
We then propose \OurSamplingMethod, an efficient algorithm to solve this optimization problem.
\OurSamplingMethod does not require any tuning of parameters, is easy to implement and scales well to large data sets.
We evaluate our approach on real-world and synthetic data.
Our evaluation is the most comprehensive one for SVDD sampling so far.
Our results show that \OurSamplingMethod outperforms its competitors in classification accuracy, in sample size, and in runtime.

\end{abstract}

\begin{IEEEkeywords}
One-class Classification, Data Reduction, Outlier Detection, Anomaly Detection
\end{IEEEkeywords}

\section{Introduction}\label{sec:intro}

Support Vector Data Description (SVDD) is one of the most popular and actively researched one-class classifiers for anomaly and novelty detection~\cite{tax_support_2004, liu2010fast, trittenbach_overview_2018}.
The basic variant of SVDD is an unsupervised classifier that fits a tight hypersphere around the majority of observations, the inliers, to distinguish them from irregular observations, the outliers.
Despite its resounding success, a downside is that SVDD and its progeny do not scale well with data size~\cite{trittenbach2019validating}.
Even efficient solvers like decomposition methods~\cite{chaudhuri2018sampling, chu2004scaling, kim2007fast, platt1998sequential} result in training times prohibitive for many applications.
In these cases, sampling for data reduction is essential~\cite{li2011selecting, hu2014fast, li2019health, alam2020sample, sun2016heuristic, qu2019towards, li2018information, xiao2014k, zhu2014boundary, krawczyk2019instance}.

\begin{figure}[t]
	\includegraphics[width=0.9\columnwidth]{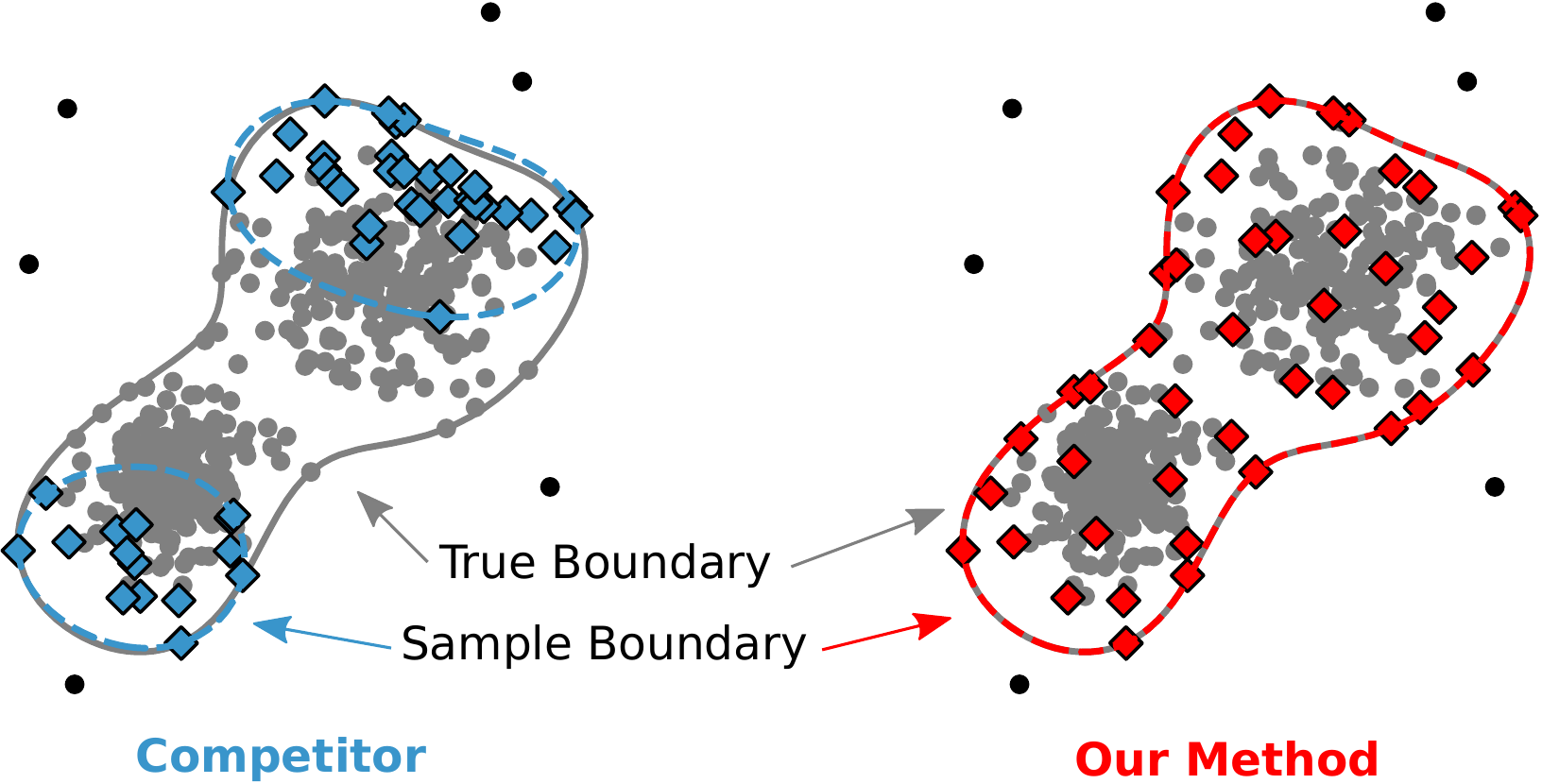}
	\centering
	\caption{Sample and decision boundary of a state-of-the-art boundary-point method~\cite{alam2020sample} and of our method \OurSamplingMethod.}
	\label{fig:intro-example}
\end{figure}

One of the defining characteristics of SVDD is that only a few observations, the support vectors, define a decision boundary.
Thus, a good sample is one for which SVDD selects support vectors similar to the original ones, i.e., the ones obtained on the full data set.
This has spurred the design of sampling methods that try to identify support-vector candidates in the original data, to retain them in the sample~\cite{li2011selecting, li2019health, qu2019towards, hu2014fast, alam2020sample, li2018information, xiao2014k, zhu2014boundary}.
A common approach is to select so-called \enquote{boundary points} as support-vector candidates, e.g., observations that are dissimilar to each other~\cite{li2011selecting, zhu2014boundary}.

But calibrating existing methods such that they indeed identify boundary points is difficult.
A reason is that the sample they return depends significantly on the choice of exogenous parameters, and selecting suitable parameter values is not intuitive (see \autoref{sec:experiments}).
A further shortcoming is that including all boundary points in a sample does not guarantee SVDD training to indeed yield the original support vectors.
The issue is that selection of support vectors hinges on other aspects, such as the ratio between inliers and outliers in the sample and a sufficient number of non-boundary observations in the sample.
Disregarding them may, for instance, fragment contiguous inlier regions and yield wrong outlier classifications after sampling, see~\autoref{fig:intro-example}. 
The influence of these aspects on SVDD is known, but their effects on sample selection are not well studied.
It is an open question how to select a sample where SVDD indeed approximates the original decision boundary.
Finally, a point largely orthogonal to these issues is that there also is very limited experimental comparison among competitors.
This makes an empirical selection of suitable SVDD sampling methods difficult as well.
\medskip

\textbf{Contributions.} In this article, we propose a novel way to SVDD sampling.
We make three contributions.
First, we reduce SVDD sampling to a decision-theoretic problem of separating data using empirical density values.
Based on this reduction, we formulate SVDD sampling as a constrained optimization problem.
Its objective is to find a minimal sample where the density of all observations of the data set is close-to-uniform.
We provide theoretical justification that a sample obtained in this way i) prevent a fragmentation of the inlier regions, and ii) retain the observations necessary to identify the original support vectors.

Second, we propose \ac{RAPID}, an efficient algorithm to solve the optimization. 
\OurSamplingMethod is the first SVDD sampling algorithm with theoretical guarantees on retaining the original decision boundaries.
\OurSamplingMethod does not require any parameters in addition to the ones already required by SVDD.
This lets \OurSamplingMethod stand out from existing methods, which all hinge on mostly unintuitive, exogenous parameters.
\OurSamplingMethod further is easy to implement, and scales well to very large data sets.

Third, we conduct the -- by far -- most comprehensive comparison of SVDD sampling methods. 
We compare \OurSamplingMethod against \num{8} methods on \num{21} real-world and \num{85} synthetic data sets.
In all experiments, \OurSamplingMethod consistently produces a small sample with high classification quality.
Overall, \OurSamplingMethod outperforms all of its competitors in the trade-off between algorithm runtime, sample size, and classification accuracy, often by an order of magnitude.
\section{Fundamentals}
\label{sec:fundamentals}

The objective of SVDD is to learn a description of a set of observations, the \emph{target}.
A good description allows to distinguish the target from other, non-target observations.
In our article, we focus on outlier detection where the targets are inliers, and the non-targets are outliers.
First, we discuss common assumptions for outlier detection.
We then introduce preliminaries and the SVDD optimization problem.

\paragraph{Assumptions}
A fundamental assumption is that observations from the target class come from a well-defined, albeit unknown distribution.
However, this may not hold for non-target observations, since outliers do not necessarily follow a common distribution.
Next, the non-target distribution may change, e.g., when novelties occur.
Thus, we assume that one can only estimate the target distribution.
A consequence is that binary classifiers are not applicable to outlier detection.

Next, one often makes assumptions regarding the composition of the training data.
In the \emph{target-only scenario}, all training observations come from the target class.
An example application would be novelty detection, i.e., novelties appear, per definition, only after training.
However, we focus on the \emph{outlier scenario} where the training data contains a majority of target observations and a few outliers.
Typically, the ratio of outliers to inliers in the data is unknown, but a common assumption is that domain experts can estimate it~\cite{achtert2010visual}.

\paragraph{Preliminaries}
Let $\mathbf{X} = \langle x_1, x_2, \dots, x_N \rangle$ be a data set of $N$ \emph{observations} from the domain $\mathbb{X} = \mathbb{R}^M$ where $M$ is the number of dimensions.
A \emph{sample} is a subset $\mathbf{S} \subseteq \mathbf{X}$ of the data set with sampling ratio $\nicefrac{\vert \mathbf{S} \vert}{N}$.
Further, we denote $x \in \mathbf{S}$ as \emph{selected}, and $x \notin \mathbf{S}$ as \emph{not-selected} observations.
The probability density of $\mathbf{X}$ is $p(x)$.
Further, let $\mathbf{Y} = \langle y_1, y_2, \dots, y_N \rangle$ be a ground truth, i.e., each entry is the realization of a dichotomous variable $\mathbb{Y} = \{\text{in}, \text{out}\}$.
The ground truth densities are the conditional probability densities $p_{\text{inlier}}(x) = P(\mathbf{X} = x \mid \mathbf{Y} = \text{in})$, and $p_{\text{outlier}}(x) = P(\mathbf{X} = x \mid \mathbf{Y} = \text{out})$ respectively.
One can estimate the empirical density of $\mathbf{X}$ by kernel density estimation.
\begin{equation}
    d_{\mathbf{X}}(x) = \sum_{x' \in \mathbf{X}} k(x, x')
\end{equation}
where $k$ is a kernel function with $k(x,x) = 1$.
A popular choice is the Gaussian kernel $k_\gamma(x, x') = e^{- \gamma \lVert x - x' \rVert}$, where $\gamma \geq 0$ is the parameter to control the kernel bandwidth.
We use the shorthand $d_x = d_{\mathbf{X}}(x)$ when the reference to $\mathbf{X}$ is unambiguous. 
Note that $d_{\mathbf{X}}$ requires normalization further to represent a probability density.
Densities can be used to characterize observations in different ways.
\begin{definition}[Level Set]\label{def:level-set}
    A level set is a set of observations with equal density $L_\theta \coloneqq \{x \in \mathbf{X} \colon d_x = \theta\}$.
    A super-level set is a set of observations with  $L^{+}_{\theta} \coloneqq \{x \in \mathbf{X} \colon d_x \geq \theta\}$.
\end{definition}
\noindent
One way to use level sets to categorize observations is to define a \emph{level-set classifier} as a function of type $g\colon \mathbb{X} \rightarrow \mathbb{Y}$ with
\begin{equation}\label{eq:density-decision-function}
	g^\mathbf{X}_{\theta}(x) = \begin{cases}
		\text{in} & \textit{if} \ x \in L^{+}_{\theta}\\
		\text{out} & \textit{else}.
	\end{cases}
\end{equation}
Another useful categorization is to separate observations into boundary points and inner points.
There are different ways to define a boundary of $\mathbf{X}$~\cite{li2011selecting, hu2014fast,li2019health, qu2019towards, alam2020sample, li2018information, xiao2014k, zhu2014boundary}.
For this article, we define boundary points as observations with density values close to the minimum empirical density.
\begin{definition}[Boundary Point]\label{def:boundary-point}
    Let $d_\text{min} = \min_{x \in \mathbf{X}} d_x$, and let $\delta$ be a small positive value.
    An observation $x \in \mathbf{X}$ is a boundary point of $\mathbf{X}$ if $x \in \mathbf{B}^\mathbf{X}$ with $\mathbf{B}^\mathbf{X} = L^{+}_{d_\text{min}} \setminus L^{+}_{(d_\text{min} + \delta)}$.
\end{definition}

\paragraph{SVDD Classifier}
SVDD~\cite{tax_support_2004} is a quadratic optimization problem that searches for a minimum enclosing hypersphere with center $a$ and radius $R$ around the data.
\begin{equation*}\label{eq:svdd-primal}
	\begin{aligned}
		\text{SVDD} \colon & \underset{a,\ R,\ \boldsymbol{\xi}}{\text{minimize}} & & R^2 + C \cdot \sum_{i=1}^{N} \xi_i \\
		& \text{subject to}	& & \lVert x_i - a \rVert^2 \leq R^2 + \xi_i, \; i = 1, \ldots, N \\
		& & & \xi_i \geq 0, \; i = 1, \ldots, N
	\end{aligned}
\end{equation*}
with cost parameter $C$ and slack variables $\boldsymbol{\xi}$.
Solving SVDD gives a fixed $a$ and $R$ and a decision function
\begin{equation}
f^\mathbf{X}(x) = \begin{cases}
\text{in} & \textit{if} \ \lVert x - a \rVert^2 \leq R^2 \\
\text{out} & \textit{else}.
\end{cases}
\end{equation}
When solving SVDD in the dual space, $f^\mathbf{X}$ only relies on inner product calculations between $x$ and some of the training observations, the support vectors.
So inference with SVDD is efficient if the number of support vectors is low.
Also note that under mild assumptions, SVDD is equivalent to $\nu$-SVM~\cite{Scholkopf2001-wh}.

SVDD has two hyperparameters, $C$ and a kernel function $k$.
$C \in \mathbb{R}_{[0,1]}$ is a trade-off parameter.
It allows some non-target observations in the training data to fall outside the hypersphere if this reduces the radius significantly.
Formally, observations outside the hypersphere with positive slack $\xi > 0$ are weighted by a cost $C$.
High values for $C$ make excluding observations expensive; based on the dual of SVDD, one can see that if $C=1$, SVDD degenerates to a hard-margin classifier~\cite{tax_support_2004}.

To allow decision boundaries of arbitrary shape, one can use the well-known kernel trick to replace inner products in the dual of SVDD by a kernel function $k$.
The most popular kernel with SVDD is the Gaussian kernel.
Its bandwidth parameter $\gamma$ controls the flexibility of the decision boundary.
%Large values make the kernel pointy and thus increase the flexibility of the decision boundary.
%Small values, on the other hand, increase the smoothness.
For $\gamma \! \rightarrow \! 0$, the decision boundary in the data space approximates a hypersphere.
Choosing good values for the two hyperparameters $\gamma$ and $C$ is difficult~\cite{liao2018new}.
There is no established way of setting the parameter values, and one must choose one of the many heuristics to tune SVDD in an unsupervised setting~\cite{scott2015multivariate, liao2018new, tax_support_2004, trittenbach2019active}.
\section{Related Work}
\label{sec:related-work}

\begin{figure}[t]

	\forestset{
		child frame/.style n args={2}{
			tikz+={
				\node [draw=blue, dotted, fit to={#1}, inner sep=0pt, label=below:{\textcolor{blue}{\emph{#2}}}] {};
			},
		},
		qtree/.style={for tree={parent anchor=south, 
				child anchor=north, align=center, outer sep=-1.1pt}
			},
	}
	\begin{center}
		\begin{forest}, baseline, qtree
			[Fast SVDD 
			[Fast Training, child frame={name=FI}{post-processing}
			[Reduction, child frame={name=S, name=K}{pre-processing}
			[Sampling \\ \cite{alam2020sample, hu2014fast, krawczyk2019instance, li2011selecting, li2018information, li2019health, sun2016heuristic, xiao2014k, zhu2014boundary, qu2019towards}, name=S]
			[Kernel Matrix \\ \cite{scholkopf2000new, achlioptas2002sampling, fine2001efficient, nguyen2008support, williams2001using, yang2012nystrom}, name=K]
			]
			[Optimization \\ \cite{chaudhuri2018sampling, chu2004scaling, kim2007fast, platt1998sequential}, name=D ]
			] 
			[Fast Inference \\ \cite{mika1999kernel, kwok2004pre, bakir2004learning, liu2010fast, peng2012efficient}, name=FI]
			];
		\end{forest}
	\end{center}
	\caption{Categorization of literature on SVDD speedup.}
	\label{fig:rel-work}
\end{figure}

\newcommand{\highlightCategory}{}
\def\highlightCategory|#1#2|{\emph{#1#2}}

SVDD is a quadratic problem (QP).
The time complexity of solving SVDD is in $\mathcal{O}(N^3)$~\cite{chu2004scaling}.
Thus, training does not scale well to large data sets.
However, the time complexity for inference is only linear in the number of support vectors.
So for large $N$, training time is much larger than inference time.
Still, long inference times may be an issue, e.g., in time-critical applications.
So curbing the runtimes has long become an important topic in the SVDD literature.
In \autoref{sec:related-work:categorization}, we categorize existing approaches that focus on SVDD speedup, see \autoref{fig:rel-work} for an overview.
In \autoref{sec:related-work:sampling}, we then turn to \emph{Sampling}, the category our current article belongs to.

\subsection{Categorization}\label{sec:related-work:categorization}
We distinguish between \highlightCategory|Fast Training| and \highlightCategory|Fast Inference|.

\paragraph{Fast Training}
To speed up training of SVDD, one has two options: reduction of the problem size, and optimization of the solver.
For \highlightCategory|Reduction|, one can distinguish further:
A first type reduces the number of observations by \highlightCategory|Sampling|.
This is the category of methods mentioned in our introduction~\cite{alam2020sample, hu2014fast, krawczyk2019instance, li2011selecting, qu2019towards, li2018information, li2019health, sun2016heuristic, xiao2014k, zhu2014boundary}.
A second type reduces the size of the \highlightCategory|Kernel matrix|, e.g., by approximation~\cite{scholkopf2000new, achlioptas2002sampling, fine2001efficient, nguyen2008support}.
Examples are the Nyström-method~\cite{williams2001using} and choosing random Fourier features~\cite{yang2012nystrom}.

\highlightCategory|Optimization| on the other hand decomposes QP into smaller chunks that can be solved efficiently.
Literature features methods that decompose with clustering~\cite{kim2007fast} and with multiple random subsets~\cite{chaudhuri2018sampling}.
The most widely used decomposition methods are sequential minimal optimization (SMO)~\cite{platt1998sequential} and its variants.
These methods iteratively divide SVDD into small QP sub-problems and solve them analytically.
Finally, there is a core-set method that expands the decision boundary by iteratively updating an SVDD solution~\cite{chu2004scaling}.

Reduction and Optimization are orthogonal to each other.
Thus, one can use problem-size reduction in a \emph{pre-processing} step before solving SVDD efficiently.

\paragraph{Fast Inference}
When SVDD uses a non-linear kernel, one cannot compute the pre-image of the center $a$.
Instead, one must compute the distance of an observation to $a$ by a linear combination of the support vectors in the kernel space.
However, literature proposes several approaches to approximate the pre-image of $a$~\cite{mika1999kernel, kwok2004pre, bakir2004learning, liu2010fast, peng2012efficient}.
With this, inference no longer depends on the support vectors, and is in $\mathcal{O}(1)$.
Fast Inference is orthogonal to Fast Training, i.e., it can come as a \emph{post-processing} step, after training.

\subsection{Sampling Methods}\label{sec:related-work:sampling}

\newcommand{\citetable}[1]{\citeauthor{#1}~\cite{#1} & \citeyear{#1}}

\label{sec:sampling}
\begin{table}[t]
\caption{Sampling methods proposed for SVDD.}
\label{tab:rel-work:competitors}
    
	\begin{threeparttable}[b]
    \renewcommand{\arraystretch}{1.1}
    \resizebox{\columnwidth}{!}{%
    \begin{tabular}{l|lclc}
        \toprule
        Method & Publication & Year & Exogenous Parameters\textsuperscript{*}\\
        \midrule
        BPS & \citetable{li2011selecting} & $k {=} \left\lfloor 10\ln N \right\rfloor$, $\varepsilon {=} 0.05$\\
        DAEDS & \citetable{hu2014fast} & $k {=} 30, \, \varepsilon {=} 0.1$, $\delta {=} 0.3$\\
        DBSRSVDD & \citetable{li2019health} & $\textit{minPts} {=} 7$, $\varepsilon {=} 0.5$\\
        FBPE & \citetable{alam2020sample} & $n {=} 360$\\
        HSR & \citetable{sun2016heuristic} & $k {=} 20$, $\varepsilon {=} 0.01 \cdot M$\\
        HSC\textsuperscript{$\dag$} & \citetable{qu2019towards} & $k {=} 20$\\
        IESRSVDD & \citetable{li2018information} & $\varepsilon {=} 0.5$\\
        KFNCBD & \citetable{xiao2014k} & $k {=} 100$, $\varepsilon {=} 0.2$\\
        NDPSR & \citetable{zhu2014boundary} & $k {=} 20$, $\varepsilon {=} 10$\\
        OCSFLSDE\textsuperscript{$\dag$} & \citetable{krawczyk2019instance} & \num{8} different parameters \\
        \bottomrule
    \end{tabular}
    }
	\begin{tablenotes}
		\item * The listed values for the exogenous parameters are the ones used \\ in our experiments.
		\item $\dag$ Not included in our experiments, see~\autoref{sec:experiments-setup} for details.
	\end{tablenotes}
	\end{threeparttable}

\end{table}

Sampling methods take the original data $\mathbf{X}$ set as an input and produce a sample $\mathbf{S}$.
All existing sampling methods assume the \emph{target-only scenario}, i.e., all observations in $\mathbf{X}$ are from the target class.
This is equivalent to a supervised setting where one has knowledge of the ground truth, and $\mathbf{Y} = \langle \text{in}, \text{in}, \dots , \text{in} \rangle$.
Thus, most of the competitors therefore require modifications to apply to the outlier scenario, see \autoref{sec:method:pre-filtering} for details.
In the following, we discuss existing sampling methods for the \emph{target-only scenario}.
We categorize them into different types: \emph{Edge-point} detectors, \emph{Pruning} methods and \emph{Others}.
\autoref{tab:rel-work:competitors} provides an overview.

\paragraph{Edge-point}
Most sampling approaches focus on selecting observations that demarcate $p_{\text{inlier}}$ from $p_{\text{outlier}}$, and therefore are expected to be support vectors.
Such observations are called \enquote{edge points} or \enquote{boundary points}.
Literature proposes different ways to identify edge points.
One idea is to use the angle between an observation and its $k$ nearest neighbors~\cite{li2011selecting, zhu2014boundary} as an indication.
An observation is selected as edge point if most of its neighbors lie within a small, convex cone with the observation as the apex.
One has to specify a threshold for the share of neighbors and the width of the cone~\cite{li2011selecting} as exogenous parameters.
Others suggest to identify edge points through a farthest neighbor search.
For instance, one suggestion is to first sort the observations by decreasing distance to its k-farthest neighbors (KFN)~\cite{xiao2014k}, and then select the top $\varepsilon$ percent as edge points.
The rationale presented in the paper is that inner points are expected to have a lower KFN distance than edge points.
A more recent variant uses angle-based search~\cite{alam2020sample}.
The idea of the paper is to initialize the method by the mean over all observations as the apex and divide the space into a pre-specified number of cones.
For each cone, one only keeps the farthest observation as edge points.

Next, there are methods that select edge points by density-based outlier rankings, e.g., DBSCAN~\cite{li2019health} and LOF~\cite{hu2014fast}.
Here, the assumption is that edge points occur in sparse regions of the data space.
A similar idea is to rank observations with a high distance to all other observations~\cite{li2018information}.
Others have suggested to rank observation highly if they have low density and a large distance to high-density observations~\cite{qu2019towards}.
Naturally, ranking methods require to set a cutoff value to distinguish edge points from other observations.

\paragraph{Pruning}
The idea of pruning is to iteratively remove observations from high-density regions as long as the sample remains \enquote{density-connected}.
One way to achieve this is by pruning all neighbors of an observation closer than a minimum distance, starting from the observation closest to the cluster mean~\cite{sun2016heuristic}.
Yet this approach requires to set the minimum distance threshold, and a good choice is data dependent.

\paragraph{Others}
There is one method that differ significantly from the previous ones~\cite{krawczyk2019instance}.
The basic idea is to generate artificial outliers to transform the problem into a binary classification problem.
Based on the augmented data, one can apply conventional sampling methods such as binary instance reduction.
The sampling method then relies on an evolutionary algorithm where the fitness function is the prediction quality on the augmented data.
Finally, the method only retains the remaining inliers and discards all artificial observations.
However, this requires to solve a large number of SVDD instances in each iteration.

\medskip
To summarize, there are many methods to select a sample for SVDD.
However, they are based upon some intuition regarding the SVDD and do not come with any formal guarantee.
Edge point detectors in particular return a poor sample in some cases, since they do not guarantee coherence of a selected sample, see \autoref{fig:intro-example}.
Further, all existing approaches require to set some exogenous parameter.
But the influence of the parameter values on the sample is difficult to grasp.
Finally, existing sampling methods are designed for the \emph{target-only scenario}.
It is unclear whether they can be modified to work well with the \emph{outlier scenario}.

\section{Density-based Sampling for SVDD}
\label{sec:method}

In this section, we present an efficient and effective sampling method for scaling SVDD to very large data sets.
In a nutshell, we exploit that an SVDD decision boundary is in fact a level-set estimate~\cite{Vert2006-ty}, and that inliers are a super-level set.
The idea behind our sampling method is to remove observations from a data set such that the inlier super-level set does not change.
To this end, we show that \emph{the super-level set of inliers does not change as long as not-selected observations have higher density than the minimum density of selected observations}.
If this \emph{density rule} is violated, sampling may produce \enquote{gaps}, i.e., regions of inliers that become regions of outliers.
Such gaps curb the SVDD quality.
Thus, we strive for a sample of minimal size that satisfies the density rule.

\autoref{fig:idea-method} illustrates our approach.
In a first step, we separate the unlabeled data into outlier and inlier regions based on their empirical density, see~\autoref{sec:method:pre-filtering}.
We then frame sample selection as a optimization problem where the constraints enforce the density rule in \autoref{sec:method:optimization}.
In \autoref{sec:method:rapid} we propose \ac{RAPID}, an efficient and easy-to-implement algorithm to solve the optimization problem.
\OurSamplingMethod returns a small sample which has a close-to-uniform density, i.e., a small sample that still obeys the density rule, and also contains the boundary points of the original data.

\begin{figure}[t]
    \centering
    \includegraphics[width=.9\columnwidth]{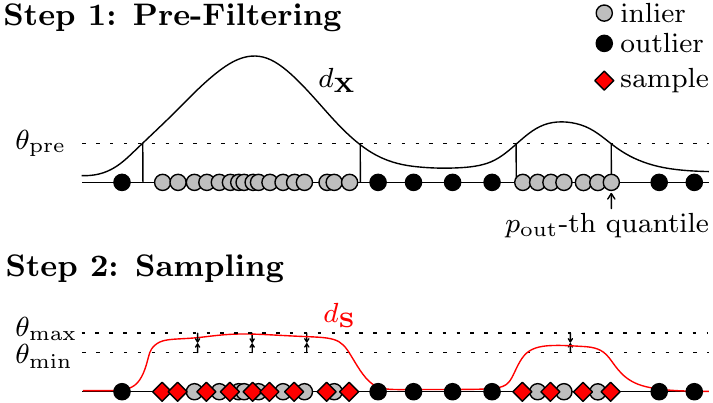}
    \caption{The idea of density-based sampling for SVDD.}
    \label{fig:idea-method}
\end{figure}

\subsection{Density-based Pre-Filtering} \label{sec:method:pre-filtering}

Any sampling method faces an inherent trade-off: reducing the size of the data as much as possible while maintaining a good classification accuracy on the sample.
One can frame this as an optimization problem
\begin{subequations}\label{eq:naive-X}
	\begin{align}
		\underset{\mathbf{S}}{\text{minimize}} \; & \quad \vert \mathbf{S} \vert \tag{\ref*{eq:naive-X}}\\
		\text{subject to} & \quad \textit{diff}(f^\mathbf{S}, f^\mathbf{X}) \leq \varepsilon,\nonumber
	\end{align}
\end{subequations}
where \textit{diff} is a similarity between two decision functions and $\varepsilon$ a tolerable deterioration in accuracy.
Solving \opref{eq:naive-X} requires knowledge of $f^\mathbf{X}$.
But obtaining this knowledge is infeasible.
The reason is that $\vert \mathbf{X} \vert $ is too large to solve --- SVDD would not need any sampling in the first place otherwise.
Thus, one cannot infer which observations $f^\mathbf{X}$ classifies as inlier or outlier.
However, we know that the SVDD hyperparameter $C$ defines a lower bound on the share of observations predicted as outliers in the training data~\cite{tax_support_2004}.
A special case is if $C=1$, since $f^\mathbf{X}(x;\ C\!=\!1) = \text{in}, \forall x \in \mathbf{X}$.
Recall that this is the upper bound of the cost parameter $C$ where SVDD degenerates to a hard-margin classifier, cf.\ \autoref{sec:fundamentals}.
In this case, \textit{diff} is zero if SVDD trained on $\mathbf{S}$, i.e., $f^\mathbf{S}$, also includes all observations within the hypersphere.
Further, we can make use of the following characteristic of SVDD.

\begin{characteristic}[SVDD Level-Set Estimator]\label{characteristic:density}
    SVDD is a consistent level set estimator for the Gaussian kernel~\cite{Vert2006-ty}.
\end{characteristic}
In consequence, inliers form a super-level set with respect to the decision boundary.
Formally, this means that there exists a level set $L_\theta$ and a corresponding level-set classifier $g^\mathbf{X}_\theta$ such that $g^\mathbf{X}_\theta \equiv f^\mathbf{X}$.
We can exploit this characteristic as follows. 
First, we \emph{pre-filter} the data based on their empirical density, such that a share of $p_\text{out}$ observations are outliers.
Formally, $p_\text{out}$ is equivalent to choosing a threshold $\theta_\text{pre}$ on the empirical density, where $\theta_\text{pre}$ is the $p_\text{out}$-th quantile of the empirical density distribution.
Using this threshold in a level-set classifier separates observations into inliers~$\mathbf{I}$ and outliers~$\mathbf{O}$.
\begin{align*}
    \mathbf{I} &= \{x \in \mathbf{X} \colon g_{\theta_\text{pre}}^{\mathbf{X}} = \text{in}\} &
    \mathbf{O} &= \{x \in \mathbf{X} \colon g_{\theta_\text{pre}}^{\mathbf{X}} = \text{out}\} .
\end{align*}

Second, we replace $f^\mathbf{X}$ with $f^\mathbf{I}$ and set $C=1$.
With this, we know that $f^\mathbf{I}(x) = \text{in}, \forall x \in \mathbf{I}$, without training $f^\mathbf{I}$.
Put differently, pre-filtering the data with an explicit threshold allows to get rid of an implicit outlier threshold $C$.
This in turn allows to estimate the level set estimated by SVDD without actually training the classifier.

Pre-filtering does not add any new exogenous parameter, but replaces the SVDD trade-off parameter $C$ with $p_\text{out}$.
Further, $p_{out}$ is a parameter of SVDD, not of our sampling method.
We also deem $p_\text{out}$ slightly more intuitive than $C$, since it makes the lower bound defined by $C$ tight, i.e., pre-filtering assumes an exact outlier ratio of $p_\text{out} = \nicefrac{|\mathbf{O}|}{|\mathbf{X}|}$.
This in turn makes the behavior of SVDD more predictable.
We close the discussion of pre-filtering with two remarks.

\begin{remark}
Technically, one may directly use the level-set classifier $g_{\theta_\text{pre}}^{\mathbf{X}}$ instead of SVDD.
However, inference times are very high, since calculating the kernel density of an unseen observation is in $\mathcal{O}(N)$.
So one would give up fast inference, one of the main benefits of SVDD.
Next, one may be tempted to interpret this pre-filtering step as a way to transform an unsupervised problem into a supervised one to train a binary classifier (e.g., SVM) on $\mathbf{O}$ and $\mathbf{I}$.
However, binary classification assumes the training data to be representative of the underlying distributions. 
This assumption is not met with outlier detection, since outliers may not come from a well-defined distribution.
Thus, binary classification is not applicable.
\end{remark}

\begin{remark}
Pre-filtering is a necessary step with all sampling methods discussed in related work.
In \autoref{sec:related-work}, we have explained that existing sampling methods assume to only have inliers in the data set, i.e., $\mathbf{I} = \mathbf{X}$ and $\mathbf{O} = \emptyset$.
However, if $\mathbf{X}$ contains outliers, this affects the sampling quality negatively and leads to poor SVDD results, see \autoref{sec:experiment-real-world-results}.
\end{remark}

\subsection{Optimal Sample Selection}\label{sec:method:optimization}

After \emph{pre-filtering}, we can reduce \opref{eq:naive-X} to a feasible optimization problem.
We begin by replacing $f^\mathbf{X}$ with $f^\mathbf{I}$.
\begin{subequations}\label{eq:naive-sample}
	\begin{align}
		\underset{\mathbf{S}}{\text{minimize}}\; & \quad \vert \mathbf{S} \vert \tag{\ref*{eq:naive-sample}}\\
		\text{subject to}	& \quad \textit{diff}(f^\mathbf{S}, f^\mathbf{I}) \leq \varepsilon.\nonumber
	\end{align}
\end{subequations}
With \autoref{characteristic:density}, we further know that both classifiers have equivalent level-set classifiers.
We set $g^{\mathbf{I}}_{\theta_\text{pre}}$ as the equivalent level-set classifier for $f^\mathbf{I}$.
For $f^\mathbf{S}$, there also exists a level-set classifier $g^{\mathbf{S}}_{\theta'}$, but the level set $\theta'$ depends on the choice of $\mathbf{S}$.
Thus, we must additionally ensure that $\theta'$ indeed is the level set estimated by training SVDD on $\mathbf{S}$.
The modified optimization problem is
\begin{subequations}\label{eq:level-set-opt}
	\begin{align}
		\underset{\mathbf{S}}{\text{minimize}}\; & \quad \vert \mathbf{S} \vert \tag{\ref*{eq:level-set-opt}} \\
		\text{subject to} & \quad  \textit{diff}(g^{\mathbf{S}}_{\theta'}, g^{\mathbf{I}}_{\theta}) \leq \varepsilon \label{eq:level-set-opt:A}\\
		& \quad g^\mathbf{S}_{\theta'} \equiv f^\mathbf{S}, \label{eq:level-set-opt:B}
	\end{align}
\end{subequations}
where $\equiv$ denotes the equivalence in classifying $\mathbf{S}$.
\constraintref{eq:level-set-opt:B} is necessary, since one may select a sample that yields a level-set classifier similar to the one obtained from $\mathbf{I}$, but on which SVDD returns another decision boundary.
This can, for instance, occur if $\mathbf{S}$ does not contain the boundary points of $\mathbf{I}$.
\opref{eq:level-set-opt} still is very abstract.
We will now elaborate on both of its constraints and show how to reduce them so that the problem becomes practically solvable.

\medskip
\paragraph{\constraintref{eq:level-set-opt:A}}
We now discuss how to obtain a sample that minimizes $\textit{diff}(g^{\mathbf{S}}_{\theta'}, g^{\mathbf{I}}_{\theta})$.
To this end, we use the following theorem.
\begin{theorem}\label{thm:uniform}
    $g^\mathbf{S}_{\theta'} \equiv g^\mathbf{I}_{\theta}$ if $d_{\mathbf{S}}$ is uniform on $\mathbf{I}$.
\end{theorem}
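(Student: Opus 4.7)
The plan is to construct an explicit threshold $\theta'$ and then check pointwise equivalence of the two level-set classifiers on $\mathbf{X}$. Since $d_\mathbf{S}$ is uniform on $\mathbf{I}$ by hypothesis, there is a constant $c > 0$ with $d_\mathbf{S}(x) = c$ for every $x \in \mathbf{I}$; I would take $\theta' := c$ (or any value in a small interval just below $c$, should a strict inequality on the outlier side be required). The forward direction is then immediate: for any $x \in \mathbf{I}$ we have $d_\mathbf{S}(x) = c \geq \theta'$, hence $x \in L^+_{\theta'}$ and $g^\mathbf{S}_{\theta'}(x) = \text{in}$, which agrees with $g^\mathbf{I}_{\theta}(x) = \text{in}$ from the pre-filtering definition of $\mathbf{I}$ in \autoref{sec:method:pre-filtering}.

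The remaining work is to show $g^\mathbf{S}_{\theta'}(x) = \text{out}$ for every $x \in \mathbf{O}$, which amounts to $d_\mathbf{S}(x) < c$. Here I would combine three ingredients: (i) monotonicity of the empirical density under subset selection, $d_\mathbf{S}(x) \leq d_\mathbf{X}(x)$, which follows from $\mathbf{S} \subseteq \mathbf{X}$ and positivity of the Gaussian kernel $k_\gamma$; (ii) the pre-filtering inequality $d_\mathbf{X}(x) < \theta$ for every $x \in \mathbf{O}$, which holds by construction of the level-set classifier $g_{\theta_\text{pre}}^{\mathbf{X}}$; and (iii) the informal \emph{density rule} stated at the start of \autoref{sec:method}, which forces the uniform level $c$ on $\mathbf{I}$ to sit above any density value occurring in $\mathbf{O}$. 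Chaining these three facts yields $d_\mathbf{S}(x) \leq d_\mathbf{X}(x) < \theta \leq c = \theta'$, establishing the desired out-classification.

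The main obstacle will be step (iii): uniformity of $d_\mathbf{S}$ only pins down the value of the density \emph{on} $\mathbf{I}$, while $d_\mathbf{S}$ is a function on all of $\mathbb{X}$, and nothing in the bare hypothesis immediately prevents $d_\mathbf{S}(x) \geq c$ for some $x \in \mathbf{O}$ lying close to $\mathbf{S}$. To close this gap, I would either (a) invoke \autoref{characteristic:density} so that the super-level set of $d_\mathbf{S}$ at level $c$ coincides with the SVDD-consistent estimate of the support of $\mathbf{I}$, or (b) relax $\theta'$ to a value slightly below $c$ so that a strict separation $\max_{x \in \mathbf{O}} d_\mathbf{S}(x) < \theta' < c$ can be enforced via the density rule — mirroring the $\delta$ slack already appearing in \autoref{def:boundary-point}. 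Option~(b) is the one I expect to carry through cleanly, because it converts the purely analytical uniformity assumption into a condition that interacts directly with the boundary-point geometry used later in the \OurSamplingMethod construction.
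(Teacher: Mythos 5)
Your first paragraph is essentially the paper's own proof: the paper also takes $\theta'$ to be the common density value (written there as $\theta_\text{min}=\min_{x\in\mathbf{S}}d_\mathbf{S}(x)$, which equals your $c$ because $\mathbf{S}\subseteq\mathbf{I}$ and $d_\mathbf{S}$ is constant on $\mathbf{I}$), and observes that every $x\in\mathbf{I}$, selected or not, satisfies $d_\mathbf{S}(x)=\theta_\text{min}\geq\theta'$ and is hence classified \emph{in}, in agreement with $g^\mathbf{I}_\theta$. That is the entire argument: the equivalence asserted in Theorem~\ref{thm:uniform} is agreement in classifying $\mathbf{I}$, the pre-filtered inliers, which is all that \constraintref{eq:level-set-opt:A} requires once $f^\mathbf{X}$ has been replaced by $f^\mathbf{I}$ with $C=1$ (so that $f^\mathbf{I}$, and likewise $g^\mathbf{I}_\theta$, labels all of $\mathbf{I}$ as \emph{in}). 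The paper makes no claim about how $g^\mathbf{S}_{\theta'}$ labels $\mathbf{O}$, and none is needed; the handling of $\mathbf{O}$ is the job of the pre-filtering step of \autoref{sec:method:pre-filtering}, not of this theorem.

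Your second and third paragraphs therefore take on an obligation the paper does not, and the argument sketched for it does not go through. The chain $d_\mathbf{S}(x)\leq d_\mathbf{X}(x)<\theta\leq c$ hinges on $\theta\leq c$, but $\theta$ (a quantile of $d_\mathbf{X}$, equivalently a level for $d_\mathbf{I}$) and $c$ (a value of the much sparser estimate $d_\mathbf{S}$) are thresholds on densities computed with respect to different reference sets and are not comparable. Indeed, positivity of the kernel gives $c=d_\mathbf{S}(x)\leq d_\mathbf{X}(x)$ for every inlier $x$, and for any sample appreciably smaller than $\mathbf{X}$ one expects $c<\theta$, so the middle inequality of your chain points the wrong way. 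The density rule does not rescue step (iii) either: it constrains not-selected \emph{inliers} relative to $\theta_\text{min}$ and says nothing about the value of $d_\mathbf{S}$ on $\mathbf{O}$; likewise \autoref{characteristic:density} gives no separation of $\mathbf{O}$ from the uniform level. So if you insist on reading the equivalence over all of $\mathbf{X}$, the theorem would need an extra separation hypothesis (your option (b) in effect assumes one); under the paper's reading, your first paragraph already completes the proof.
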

\begin{proof}
    Think of a sample $\mathbf{S} \subseteq \mathbf{I}$ with uniform empirical density $d_\mathbf{S}$.
    Then $\mathbf{S}$ has exactly one level set $\theta'=\theta_\text{min} = \min_{x \in \mathbf{S}} d_{\mathbf{S}}(x)$.
    Further, it also holds that $d_{\mathbf{S}}(x) = \theta_\text{min}$, $\forall x \in \mathbf{I}$.
    It follows that $\min_{x \in \mathbf{I} \setminus \mathbf{S}} d_{\mathbf{S}}(x) = \min_{x \in \mathbf{S}} d_{\mathbf{S}}(x)$, and consequently $g^\mathbf{S}_{\theta_\text{min}}(x) = g^\mathbf{I}_{\theta}(x), \forall x \in \mathbf{I}$.
\end{proof}
\autoref{thm:uniform} implies that one can satisfy \constraintref{eq:level-set-opt:A} with $\varepsilon=0$ if one reduces the sample to one with a uniform empirical distribution $d_\mathbf{S}$.
However, any empirical density estimate on a finite sample can only \emph{approximate} a uniform distribution.
So one should strive for solutions of \opref{eq:level-set-opt} where epsilon is small.
Put differently, one can interpret the difference between a perfect uniform distribution and the empirical density to assess the quality of a sample.
We propose to quantify the fit with a uniform distribution as the difference between the maximum density $\theta_\text{max} = \max_{x \in \mathbf{S}} d_\mathbf{S}(x)$ and minimum density $\theta_\text{min} = \min_{x \in \mathbf{S}} d_\mathbf{S}(x)$:
\begin{equation}
    \Delta^{\mathbf{S}}_\text{fit} = \theta_\text{max} - \theta_\text{min}
\end{equation}
There certainly are other ways to evaluate the goodness of fit between distributions.
However, $\Delta^{\mathbf{S}}_\text{fit}$ has some desirable properties of the sample, which we discuss in \autoref{thm:boundary-removal}.

\newcommand{\optname}{\hyperref[eq:theta-optimization]{SOP}\xspace}

One further consequence of only approximating a uniform density is that there may be some not-selected observations $x \in \mathbf{I}\setminus \mathbf{S}$ with a density value $d_{\mathbf{S}}(x)$ less than $\theta_\text{min}$.
Since the level set estimated by $f^\mathbf{S}$ is $L_{\theta_\text{min}}$, these not-selected observations would be wrongly classified as outliers.
Thus, we must also ensure that $\mathbf{S}$ is selected so that $d_{\mathbf{S}}(x) \geq \theta_\text{min}, \forall x \in \mathbf{I}\setminus \mathbf{S}$.
We can now re-formulate \constraintref{eq:level-set-opt:A} as a sample optimization problem \hyperref[eq:theta-optimization]{\optname}.
\begin{subequations}\label{eq:theta-optimization}
	\begin{align}
	     & \text{\optname}\colon \! \underset{\mathbf{v}, \mathbf{w}, \theta_\text{min}, \theta_\text{max}}{\text{minimize}} \quad \theta_\text{max} - \theta_\text{min} \tag{\ref*{eq:theta-optimization}} \\
		 \text{s.t.} & \underbrace{\sum_{j \in \mathcal{I}} v_j \! \cdot \! k(x_i, x_j)}_{d_\mathbf{S}(x_i)} \geq  \theta_\text{min}, \, \forall i \in \mathcal{I} \label{eq:theta-optimization:min}\\
		 & \sum_{j \in \mathcal{I}} v_j \! \cdot \! k(x_i, x_j) \leq v_i \! \cdot \theta_\text{max}, \, \forall i \in \mathcal{I} \label{eq:theta-optimization:max} \\
    	 & \sum_{j \in \mathcal{I}} w_i \! \cdot \! v_j \! \cdot \! k(x_i, x_j) \leq \theta_{\text{min}}, \, \forall i \in \mathcal{I} \label{eq:theta-optimization:compare} \\
    	 & \sum_{j \in \mathcal{I}} v_j > 0; \sum_{j \in \mathcal{I}} w_j = 1; \; v_j \geq w_j, \forall j \in \mathcal{I}\cup\mathcal{O} \label{eq:theta-optimization:nonzero}\\
		 & v_j = 0, \forall j \in \mathcal{O}; v_j, w_j \in \{0,1\}, \forall j \in \mathcal{I}\cup\mathcal{O} \label{eq:theta-optimization:prefilter}
	\end{align}
\end{subequations}
where $\mathcal{I} = \{i \ \vert \ i \in \{1, \dots, N\}, x_i \in \mathbf{I}\}$, $\mathcal{O}=\{1, \dots, N\} \setminus \mathcal{I}$.
The decision variable $v_j=1$ indicates if an observation $x_j$ is in $\mathbf{S}$, i.e., $\mathbf{S} = \{x_i \in \mathbf{X} \ \vert \ v_i = 1\}$.
If the solution set of \optname is not singular, we select the solution where $\vert \mathbf{S} \vert$ is minimal.
\constraintref{eq:theta-optimization:max} is a technical necessity to obtain the maximum density of $d_\mathbf{S}$.
The first constraint in~\ref{eq:theta-optimization:nonzero} rules out the trivial solution $v = \Vec{0}$.
The first constraint in~\ref{eq:theta-optimization:prefilter} results from the \emph{pre-filtering}, cf.\ \autoref{sec:method:pre-filtering}.

Constraints~\ref{eq:theta-optimization:min}, \ref{eq:theta-optimization:compare}, and \ref{eq:theta-optimization:nonzero} together guarantee that the density of not-selected observations is at least $\theta_\text{min}$, as follows.
Only for one observation $j$ we have $w_j = 1$ and for all other observations $i \neq j, \; w_i = 0$.
Then for Constraint~\ref{eq:theta-optimization:compare} and \ref{eq:theta-optimization:nonzero} to hold, $j$ must be the observation with the minimum density and $d_\mathbf{S}(x_j) = \theta_\text{min}$.
Additionally, with $v_j \geq w_j$ it follows that $v_j = 1$, thus observation $j$ is in the sample $\mathbf{S}$.
So, for any feasible solution of \hyperref[eq:theta-optimization]{\optname} all not-selected observations have a density of at least the minimum density of the selected observations.
From \ref{eq:theta-optimization:min}, it follows that $d_\mathbf{S}(x) \geq \theta_\text{min}, \forall x \in \mathbf{I}$.
So any solution of \hyperref[eq:theta-optimization]{\optname} satisfies \inequalityref{eq:level-set-opt:A} with a small~$\varepsilon$.

\medskip
\paragraph{\constraintref{eq:level-set-opt:B}}

We now show that a solution of \optname also satisfies \constraintref{eq:level-set-opt:B}.
To this end, we make use of the following characteristic.
\begin{characteristic}[Boundary Points]\label{characteristic:boundary-points}
    The set of boundary points are a superset of the support vectors of SVDD.
\end{characteristic}
So for \constraintref{eq:level-set-opt:B} to hold, an optimum of \optname must contain boundary points of $\mathbf{I}$.
We show that a solution with boundary points is preferred over one without boundary points by the following theorem.
\begin{theorem}\label{thm:boundary-removal}
    The set of boundary points does not change when solving \hyperref[eq:theta-optimization]{\optname} iteratively.
\end{theorem}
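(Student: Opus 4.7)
The plan is to prove the theorem by induction on the iteration step of solving \optname, showing that boundary points of $\mathbf{I}$ are never removed and that no non-boundary point is promoted to boundary status. The base case is trivial: at the start, $\mathbf{S}_0 = \mathbf{I}$ and $\mathbf{B}^{\mathbf{S}_0} = \mathbf{B}^{\mathbf{I}}$ by construction. For the inductive step, I would assume $\mathbf{B}^{\mathbf{S}_t} = \mathbf{B}^{\mathbf{I}}$ at iteration $t$ and show that the single observation removed to form $\mathbf{S}_{t+1}$ is never a boundary point, and also that removing it does not push any previously non-boundary observation into the boundary band.

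The core technical step exploits the asymmetry between self-contributions and cross-contributions in the kernel density $d_\mathbf{S}(x) = \sum_{x' \in \mathbf{S}} k(x,x')$. Specifically, consider removing a boundary point $b \in \mathbf{B}^{\mathbf{I}}$: the kernel self-term $k(b,b)=1$ is lost, so $d_{\mathbf{S}_t \setminus \{b\}}(b) = d_{\mathbf{S}_t}(b) - 1$. Since $b$ sits in the band $[d_\text{min}, d_\text{min} + \delta)$, it is (up to $\delta$) the observation attaining $\theta_\text{min}$, and constraint \ref{eq:theta-optimization:min} forces $\theta_\text{min}$ to drop by nearly $1$. In contrast, removing a non-boundary observation $x^\ast$ only removes a cross-term $k(b, x^\ast) < 1$ from each $d_{\mathbf{S}_t}(b)$, where $b \in \mathbf{B}^{\mathbf{I}}$. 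Combining this with the fact that $x^\ast$ living in a dense region produces a large decrement in $\theta_\text{max}$, I would conclude that removing $x^\ast$ strictly decreases $\theta_\text{max} - \theta_\text{min}$ more than removing $b$ does; an optimal iteration therefore never selects a boundary point for removal. This is exactly the desired property of $\Delta^{\mathbf{S}}_\text{fit}$ alluded to after its definition.

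For the second direction, I would argue that because $\theta_\text{min}$ is preserved (to within a tolerance controlled by $\delta$) across iterations and because removing a high-density observation decreases the densities of nearby observations by $k(\cdot, x^\ast)$, a previously non-boundary observation can drop into $[\theta_\text{min}, \theta_\text{min} + \delta)$ only if it was already within $\delta$ of the boundary band. Choosing $\delta$ consistently with \autoref{def:boundary-point} rules this out, giving $\mathbf{B}^{\mathbf{S}_{t+1}} = \mathbf{B}^{\mathbf{S}_t} = \mathbf{B}^{\mathbf{I}}$.

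The main obstacle will be making the quantitative comparison between the $\theta_\text{min}$-drop from removing $b$ and from removing $x^\ast$ fully rigorous, since both removals perturb every density simultaneously and the argument above treats them locally. A careful book-keeping using that boundary points, by definition, have the smallest density (so any other observation has strictly more \enquote{slack} to $\theta_\text{min}$) should close this gap; alternatively, the bound can be sharpened by appealing to \characteristicautorefname~\ref{characteristic:boundary-points}, which tells us that boundary points are the candidates SVDD needs to retain as support vectors, so removing one is guaranteed to violate constraint~\ref{eq:level-set-opt:B} in any event.
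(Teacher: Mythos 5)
Your core mechanism is the same one the paper's proof runs on: the asymmetry between the self-term $k(x,x)=1$ that a removed observation loses from its own density and the cross-terms $k(x,x')<1$ it subtracts from everyone else's. Where you diverge is in how that asymmetry is cashed out. The paper compares objective values: it reduces $\Delta^{\mathbf{S}_p}_\text{fit} \leq \Delta^{\mathbf{S}_{\text{max}}}_\text{fit}$ to $k(x_p,x_\text{max}) - k(x_p,x_\text{min}) \geq 1 - k(x_\text{max},x_\text{min})$, notes that $x_p = x_\text{min}$ makes the left side negative and the right side positive (so removing the minimum-density, i.e.\ boundary, point is never the best move), and then invokes two explicit assumptions (A1: $x_\text{max}$ and $x_\text{min}$ are far apart; A2: small bandwidth) to conclude that $x_p = x_\text{max}$ is optimal and that $\theta_\text{min}$, hence the boundary set, is essentially unchanged. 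You instead route the exclusion of boundary points through feasibility of \constraintref{eq:theta-optimization:min}: after deleting a boundary point $b$, its density drops by the full self-term while the minimum over the remaining selected points moves only by a cross-term, so $b$ falls below the new $\theta_\text{min}$. That is a legitimate and arguably cleaner argument, but your phrasing is off: the constraint does not ``force $\theta_\text{min}$ to drop by nearly $1$'' --- $\theta_\text{min}$ is pinned by Constraints~\ref{eq:theta-optimization:compare} and~\ref{eq:theta-optimization:nonzero} to the minimum density of a \emph{selected} observation, which barely changes; what actually fails is $d_{\mathbf{S}\setminus\{b\}}(b) = d_{\mathbf{S}}(b)-1 \geq \theta'_\text{min}$, unless some other selected point is nearly coincident with $b$ and has even lower density. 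Ruling out that degenerate case is precisely the role the paper's (A1)/(A2) play, and you correctly identify it as the remaining obstacle; your ``slack to $\theta_\text{min}$'' book-keeping is the right way to close it, so make those assumptions explicit rather than hoping the gap closes.

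Two further points. Your converse direction (no interior point demoted into the boundary band) matches the paper's one-line observation that removing $x_\text{max}$ barely perturbs $\theta_\text{min}$, and is fine at the same level of rigor. However, drop the fallback via Characteristic~\ref{characteristic:boundary-points}: arguing that removing a boundary point ``violates \constraintref{eq:level-set-opt:B}'' is circular, because this theorem exists precisely to establish that iterative solutions satisfy \constraintref{eq:level-set-opt:B}; that constraint is not enforced inside the iteration you are analyzing, so it cannot be appealed to.
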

\begin{proof}
Suppose that there exists a sample $\mathbf{S}$ which is not a local optimum of \hyperref[eq:theta-optimization]{\optname}.
Then there is a boundary point $x_{\text{min}} = \argmin_{x \in \mathbf{S}} d_{\mathbf{S}}(x)$, an observations $x_{\text{max}} = \argmax_{x \in \mathbf{S}} d_{\mathbf{S}}(x)$ and $x_p \in \mathbf{S}$.
Let $\mathbf{S}_p = \mathbf{S} \! \setminus \! \{x_p\}$ and $\mathbf{S}_{\text{max}} = \mathbf{S} \! \setminus \! \{x_{\text{max}}\}$.
If removing $x_p$ from $\mathbf{S}$ is an optimal choice, there must be no other observation that reduces the objective more than $x_p$.
Thus, the following specific case must hold:
\begin{equation}\label{eq:proof2}
    \begin{array}{@{}l@{\;}l}
        & \Delta^{\mathbf{S}_p}_\text{fit} \leq \Delta^{\mathbf{S}_{\text{max}}}_\text{fit} \\[0.5ex]
        \Leftrightarrow & \theta_{\text{max}} \! - \! k(x_p, x_{\text{max}}) \! - \!  (\theta_{\text{min}} \! - \!  k(x_p, x_{\text{min}}) \\
        & \leq \theta_{\text{max}} \! - \!  k(x_{\text{max}}, x_{\text{max}}) \! - \!  (\theta_{\text{min}} \! - \!  k(x_{\text{max}}, x_{\text{min}})) \\[0.5ex]
        \Leftrightarrow & k(x_p, x_{\text{max}}) \! - \!  k(x_p, x_{\text{min}}) \geq 1 \! - \!  k(x_{\text{max}}, x_{\text{min}}).
    \end{array}
\end{equation}
For one, we conclude that $x_p = x_\text{min}$ is not feasible, because in this case the left hand side of \inequalityref{eq:proof2} is strictly negative, and right hand side positive.
Since boundary points have, per \autoref{def:boundary-point}, a density close to $\theta_{\text{min}}$, they cannot be a candidate for removal.

Next, under two assumptions that (A1) the locations of the maximum and of the minimum density are distant from each other, and that (A2) the kernel bandwidth is sufficiently small, we have $k(x_\text{max}, x_\text{min})~\rightarrow~0$, and $k(x_p, x_\text{max}) - k(x_p, x_\text{min}) \geq 1 \Leftrightarrow x_p = x_\text{max}$.
So in this case, removing $x_\text{max}$ is optimal.
From this, it also follows that the minimum density does not change significantly when removing $x_\text{max}$.
With \autoref{def:boundary-point}, it follows that also the set of boundary points does not change after removing $x_\text{max}$.
\end{proof}

\begin{remark}
    Our proof hinges on two assumptions:
    \emph{(A1) A sufficiently large distance between $x_\text{max}$ and $x_\text{min}$.}
    This assumption is intuitive, since removing an observation with a density close to $\max_{x \in \mathbf{S}} d_{\mathbf{S}}(x)$ improves $\Delta_\text{fit}$ more than removing one close to $\min_{x \in \mathbf{S}} d_{\mathbf{S}}(x)$.
    Generally, the distance between $x_\text{max}$ and $x_\text{min}$ depends on the data distribution.
    However, we find that this is not a limitation in practice, see \autoref{sec:experiments}.
    \emph{(A2) A sufficiently small kernel bandwidth.}
    This assumption is reasonable, because when selecting the kernel bandwidth, one strives to avoid underfitting, i.e., avoid kernels bandwidth that are too wide.
    This holds empirically as well, see \autoref{sec:experiments}.
\end{remark}

\begin{algorithm}[t!]
    \setstretch{1.13}
	\small
	\DontPrintSemicolon
	\InOut
	\Input{Data set $\mathbf{X} \in \mathbb{R}^{N \! \times \! M}$
        Kernel function $k(x_i, x_j)$,\\
        Outlier percentage $p_\text{out} \in [0, 1]$}
	\Output{Sample indices $\mathcal{S}$}

	\BlankLine
	$d = \langle \sum_{j=1}^N k(x_1, x_j), \dots, \sum_{j=1}^N k(x_N, x_j) \rangle$
	\Comment*{$\mathcal{O}(N^2)$}
    \Comment{Pre-filtering}
	$\theta_\text{pre} = \text{sort-ascending}(d)_{\lfloor p_\text{out} \cdot N \rfloor}$
	\Comment*{$\mathcal{O}(N\log N)$}
	$\mathcal{S} = \mathcal{I} = \{i \ \vert \ i \in \{1, \dots, N\},\ d_i \geq \theta_\text{pre}\}$
	\Comment*{$\mathcal{O}(N)$}
	$\mathcal{O} = \{i \ \vert \ i \in \{1, \dots, N\}\} \setminus \mathcal{I}$
	\Comment*{$\mathcal{O}(1)$}
	$d = d \! - \!\langle \sum_{j \in \mathcal{O}} k(x_1, x_j), \dots, \sum_{j \in \mathcal{O}} k(x_N, x_j) \rangle$
	\Comment*{$\mathcal{O}(N^2)$}
    \Comment{Sampling}
	\For(\Comment*[f]{$\mathcal{O}(N^2)$}){$\text{iter} \leftarrow 1 \dots |\mathcal{I}|-1$}{
		$r = \argmax_{i \in \mathcal{S}} d_i$\;
		$d = d - \langle k(x_1, x_r), \dots, k(x_N, x_r) \rangle$\;
		$\theta_\text{min} = \min_{i \in \mathcal{S}}\, (d_i)$\;
		\If(){$\exists \, i \in \mathcal{I}: d_i < \theta_\text{min}$}{
			\Return $\mathcal{S}$\;
		}
		$\mathcal{S} = \mathcal{S} \setminus \{r\}$\; 
	}
	\Return $\mathcal{S}$\;
	
	\caption{\OurSamplingMethod}
	\label{alg:our-method}
\end{algorithm}

\hyperref[eq:theta-optimization]{\optname} is theoretically appealing.
However, it is a mixed-integer problem with non-convex constraints, and it is hard to solve.
Thus, solver runtimes quickly become prohibitive, even for relatively small problem instances -- this contradicts the motivation for sampling.
We therefore propose \OurSamplingMethod, a fast heuristic to search for a local optimum of \optname.

\subsection{A RAPID Approximation}\label{sec:method:rapid}

The idea of our approximation is to initialize $\mathbf{S} = \mathbf{I}$, which is a feasible solution to \hyperref[eq:theta-optimization]{\optname}, and remove observations from $\mathbf{S}$ iteratively as long as $\mathbf{S}$ remains feasible, see \autoref{alg:our-method}.

As input parameters \OurSamplingMethod takes the data set $\mathbf{X}$, the expected outlier percentage $p_\text{out}$ and a kernel function $k$.
Lines 1--5 are the initialization and the \emph{pre-filtering}.
\OurSamplingMethod then iteratively selects the most dense observation $x_\text{max}$ in the current sample $\mathbf{S}$ for removal (Line 7) and updates the densities (Line 8).
If $\mathbf{S}\setminus \{x_\text{max}\}$ is infeasible, \OurSamplingMethod terminates (Line 9--11).
Line 10 checks whether there is an observation $x_i \in \mathbf{I}$ that violates \constraintref{eq:theta-optimization:min}.
As required by \hyperref[eq:theta-optimization]{\optname}, \OurSamplingMethod does not remove boundary points.
This is because $x_\text{max}$ must not be a boundary point, as long as $\mathbf{S}$ is not uniform, i.e., $\Delta^{\mathbf{S}}_\text{fit} > 0$.
Thus, a solution of \OurSamplingMethod satisfies both \constraintref{eq:level-set-opt:A} and~\constraintref{eq:level-set-opt:B}.

Finally, we discuss the time complexity of our method.
\begin{lemma}
    The overall time complexity of \OurSamplingMethod is in $\mathcal{O}(N^2)$.
\end{lemma}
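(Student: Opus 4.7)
The plan is to walk through Algorithm~\ref{alg:our-method} line by line and sum the costs, exploiting the fact that the per-iteration work of the main loop is already linear thanks to the incremental maintenance of the density vector $d$.

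First I would analyse the initialization block (Lines~1--5). Computing the full density vector $d$ over $\mathbf{X}$ requires evaluating the kernel on every pair $(x_i,x_j)$, which is $\mathcal{O}(N^2)$. Sorting $d$ to extract the $p_\text{out}$-quantile $\theta_\text{pre}$ is $\mathcal{O}(N\log N)$, while building the index sets $\mathcal{I}$ and $\mathcal{O}$ is $\mathcal{O}(N)$. The update of $d$ in Line~5 that subtracts the contributions of the pre-filtered outliers is again a pairwise sum, i.e.\ $\mathcal{O}(N\cdot|\mathcal{O}|)=\mathcal{O}(N^2)$ in the worst case. So the initialization is $\mathcal{O}(N^2)$.

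Next I would bound the main loop (Lines~6--13). The loop executes at most $|\mathcal{I}|-1 \le N-1$ times. The key observation is that after pre-filtering, $d_i$ stores $d_\mathbf{S}(x_i)$ for the \emph{current} sample $\mathbf{S}$, so when an observation $x_r$ is removed from $\mathbf{S}$, the densities of all remaining points can be updated in $\mathcal{O}(N)$ by subtracting a single row of kernel values (Line~8). The argmax in Line~7, the $\min$ in Line~9, and the feasibility check in Line~10 are each linear scans of size $\mathcal{O}(N)$. Thus every iteration costs $\mathcal{O}(N)$, and the whole loop is $\mathcal{O}(N^2)$.

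Summing initialization and loop yields the claimed $\mathcal{O}(N^2)$ bound. The only subtlety—and the one place I would be careful—is justifying that the density vector truly can be kept up-to-date in linear time per iteration without a recomputation from scratch; this hinges on the additivity of $d_\mathbf{S}(x_i)=\sum_{j\in\mathcal{S}}k(x_i,x_j)$ in the index set $\mathcal{S}$, which lets each removal translate into one vector subtraction. Once this incremental invariant is stated, the rest is a straightforward accounting of constant-time and linear-time primitives, and the overall bound follows immediately.
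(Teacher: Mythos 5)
Your proof is correct and follows essentially the same route as the paper, which simply points to the per-line complexity annotations in Algorithm~\ref{alg:our-method} and notes that the pairwise kernel evaluations dominate. Your version is a more detailed write-up of that accounting, and the incremental-update invariant you single out is exactly what justifies the paper's $\mathcal{O}(N^2)$ annotation on the main loop.
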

\begin{proof}
    See \autoref{alg:our-method} for the step-wise time complexities.
    Overall, the pairwise kernel evaluation dominates the time complexity, and thus \OurSamplingMethod is in $\mathcal{O}(N^2)$.
\end{proof}
So \OurSamplingMethod has a lower time complexity than SVDD.
Further, \OurSamplingMethod is simple to implement with only a few lines of code.
It is efficient, since each iteration (Line 7--11) requires only one pass over the data set to update the densities, compute the new $x_\text{max}$, $\theta_\text{min}$ and minimum inlier density for the termination criterion.
One may further pre-compute the Gram matrix $\mathbf{K}$ for $\mathbf{X}$ to avoid redundant kernel function evaluations.
\section{Experiments}\label{sec:experiments}
\begin{figure*}[t]
	\centering
	\includegraphics[width=0.95\textwidth]{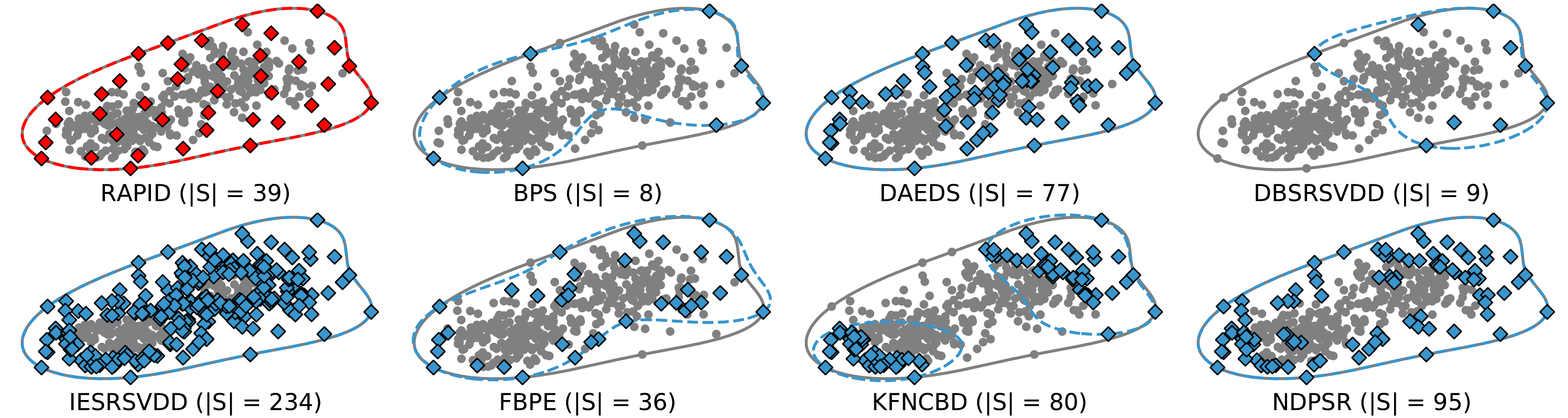}
	\caption{Sampling strategies applied to a synthetic Gaussian mixture with two components and $N = 400$.
	The grey points are the original data set and the red/blue diamonds the selected observations.
	The original decision boundary is the grey line and the red/blue one is the boundary trained on the sample.
	We omit HSR since it returns $\mathbf{S} = \mathbf{X}$ with recommended parameter values.}
	\label{fig:2d-example}
\end{figure*}
 
We now turn to an empirical evaluation of \OurSamplingMethod.
Our evaluation consists of two parts.
In the first part, we evaluate how well \OurSamplingMethod copes with different characteristics of the data, i.e., with the dimensionality, the number of observations, and the complexity of the data distribution, see \autoref{sec:experiment-synthetic-results}.
The second part is an evaluation on a large real-world benchmark for outlier detection.
We have implemented \OurSamplingMethod as well as the competitors in an open-source framework written in Julia~\cite{bezanson2017julia}.
Our implementation, data sets, raw results, and evaluation notebooks are publicly available.
\footnote{\website\label{fn:website}}

\begin{figure*}[ht!]
	\hspace{1.4em}
	\subfloat[Scaling N (M=\num{50}, \#Components = 5)]{\label{fig:eval:syn:size}\hspace{15em}}\hspace{1.2em}
	\subfloat[Scaling M (N=\num{1000}, \#Components = 5)]{\label{fig:eval:syn:dimensionality}\hspace{15em}}\hspace{0.8em}
	\subfloat[Scaling \#Components (N=\num{1000}, M=\num{50})]{\label{fig:eval:syn:difficulty}\hspace{15em}}\\
	\includegraphics[width=0.95\textwidth]{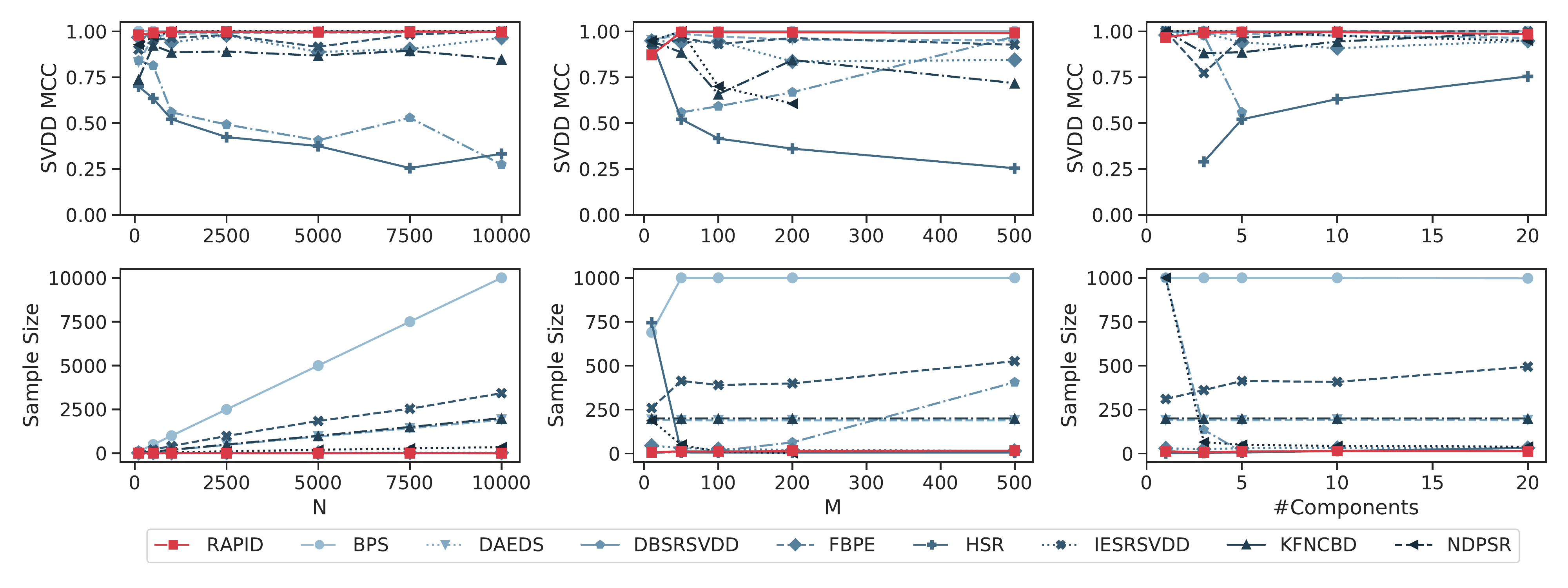}
	\caption{Evaluation on synthetic data with varying data size (N), dimensionality (M), and complexity (\#Components).
		An optimal sampling always yields a MCC of 1 in the upper row and very small sample size in the bottom row, i.e., altering any data characteristics does not influence the sampling.
	Some values for the competitors are missing due to an empty sample.}
	\label{fig:eval:syn}
\end{figure*}

\subsection{Setup}\label{sec:experiments-setup}
We first introduce our experimental setup, including evaluation metrics, as well as the parametrization of SVDD and its competitors.
Recall that \OurSamplingMethod does not have any exogenous parameter.
One must only specify $p_\text{out}$ instead of the SVDD hyperparameter $C$, cf.\ \autoref{sec:method:pre-filtering}.

\paragraph{Metrics}
Sampling methods trade classification quality for sample size, and one must evaluate this trade-off explicitly.
We report the sample size $\vert \mathbf{S}\vert$ and sample ratio $\nicefrac{\vert \mathbf{S}\vert}{\vert \mathbf{X} \vert}$ for each result.
For classification quality, we use the Matthews Correlation Coefficient (MCC) on $\mathbf{X}$.
MCC is well-suited for imbalanced data and returns values in $[-1, 1]$; higher values are better.
Our experiments do not require a train-test split, since all sampling methods are unsupervised.
For non-deterministic methods, we report average values over five repetitions.
Our experiments ran on an AMD Ryzen Threadripper 2990WX with \num{64} virtual cores and \num{128}~GB~RAM.

\paragraph{SVDD}
SVDD requires to set two hyperparameters: the Gaussian kernel parameter $\gamma$ and the trade-off parameter $C$.
We tune $\gamma$ with \emph{Scott's Rule}~\cite{scott2015multivariate} for real-world data. 
For high-dimensional synthetic data, however, we found that the \emph{Modified Mean Criterion}~\cite{liao2018new} is a better choice.
Because of \emph{pre-filtering} we set $C=1$, cf.\ \autoref{sec:method:pre-filtering}.

\paragraph{Competitors}
We compare our method against \num{8} competitors, see \autoref{tab:rel-work:competitors}.
The approaches from~\cite{qu2019towards} and~\cite{krawczyk2019instance} require to solve several hundreds of SVDDs, resulting in prohibitive runtimes.
We do not include them in our evaluation.
We initialize the exogenous parameters according to the guidelines in the original publications.
In some cases, the recommendations do not lead to a useful sample, e.g., $\mathbf{S} = \emptyset$.
To ensure a fair comparison, we mitigate these issues by fine-tuning the parameter values through preliminary experiments.

Next, we compare two variants of each competitor: sampling on $\mathbf{X}$ as in their original version, and sampling on $\mathbf{I}$, i.e., after applying our \emph{pre-filtering}.
The \emph{pre-filtering} requires to specify the expected outlier percentage.
In practice, one can rely on domain knowledge or estimate it~\cite{achtert2010visual}.
To avoid any bias when over- or under-estimating the outlier percentage, we set it to the true percentage.
Nevertheless, we have run additional experiments where we deliberately deviate from the true percentage.
We found that deviating affects the performance of all sampling methods similarly.
So, our conclusions do not depend on this variation, and we report the respective results only in the supplementary materials.\textsuperscript{\ref{fn:website}}

We also evaluate against random baselines.
Each baseline $\textit{Rand}_\textit{r}$ returns a random subset with a specified sample ratio~$r$.
We report results for a range of sample ratios $r \in [0.01, 1.0]$ to put the quality of competitors into perspective.

\subsection{Evaluation of Sample Characteristics}\label{sec:experiment-synthetic-results}

The first part of our experiments validates different properties of \OurSamplingMethod and of its competitors.
Our intention is to give an intuition of how a sample is selected, and to explore under which conditions the sampling methods work well.
The basis for our experiments are synthetic data sets with controlled characteristics.
Specifically, we generate data from Gaussian mixtures with varying number of mixture components, data dimensions, and number of observations.
We run these experiments to answer the following two questions.

\smallskip
\begin{enumerate}[label=\textbf{Q\arabic*}, leftmargin=2.5em, rightmargin=2em, topsep=0.4ex, parsep=0.4ex, ref=Question~Q\arabic*]
	\item How are observations in a sample distributed? \label{q:sample-distribution}
\end{enumerate}
To get an intuition about the sample distribution, we run \OurSamplingMethod and the competitors on a bi-modal Gaussian mixture, see \autoref{fig:2d-example}.
The tendencies of the methods to select boundary points and inner points are clearly visible.
For instance, BPS only selects a sparse set of boundary points; IESRSVDD only prunes high-density areas.
As expected, \OurSamplingMethod selects both the boundary points and a uniformly distributed set of inner points.
The decision boundary of \OurSamplingMethod matches the one obtained from the full data set perfectly.
Only three competitors (DAEDS, IESRSVDD, and NDPSR) also result in an accurate decision boundary.
But all of them produce significantly larger sample sizes than \OurSamplingMethod.

\smallskip
\begin{enumerate}[resume*]
	\item To what extent do data characteristics influence a sample and the resulting classification quality? \label{q:data-characteristics}
\end{enumerate}
To explore this question, we individually vary the number of observations, the dimensionality, and the number of the mixture components.

\emph{Number of observations}:
Ceteris paribus, increasing the number of observations should not have a significant impact on the observations selected.
This expectation is reasonable, since increasing the data size does not change the underlying distribution and the true decision boundary.
\autoref{fig:eval:syn:size} graphs the sample quality and sample size for the different methods.
Many competitors (BPS, IESRSVDD, KFNCB, and DAEDS) do not scale well with more observations, i.e., the sample sizes increase significantly.
BPS scales worst and only removes a tiny fraction of observations.
Further, the sample quality drops significantly with more than \num{500} observations for some competitors (DBSRSVDD and HSR).
\OurSamplingMethod on the other hand is robust with increasing data size, for both sample quality and sample size.
The sample sizes returned are small, even for large data sets, and the resulting quality is always close to \mbox{MCC = \num{1.0}}.

\begin{figure}[t!]
	\centering
	\includegraphics[width=\columnwidth, trim= 0 10 0 0]{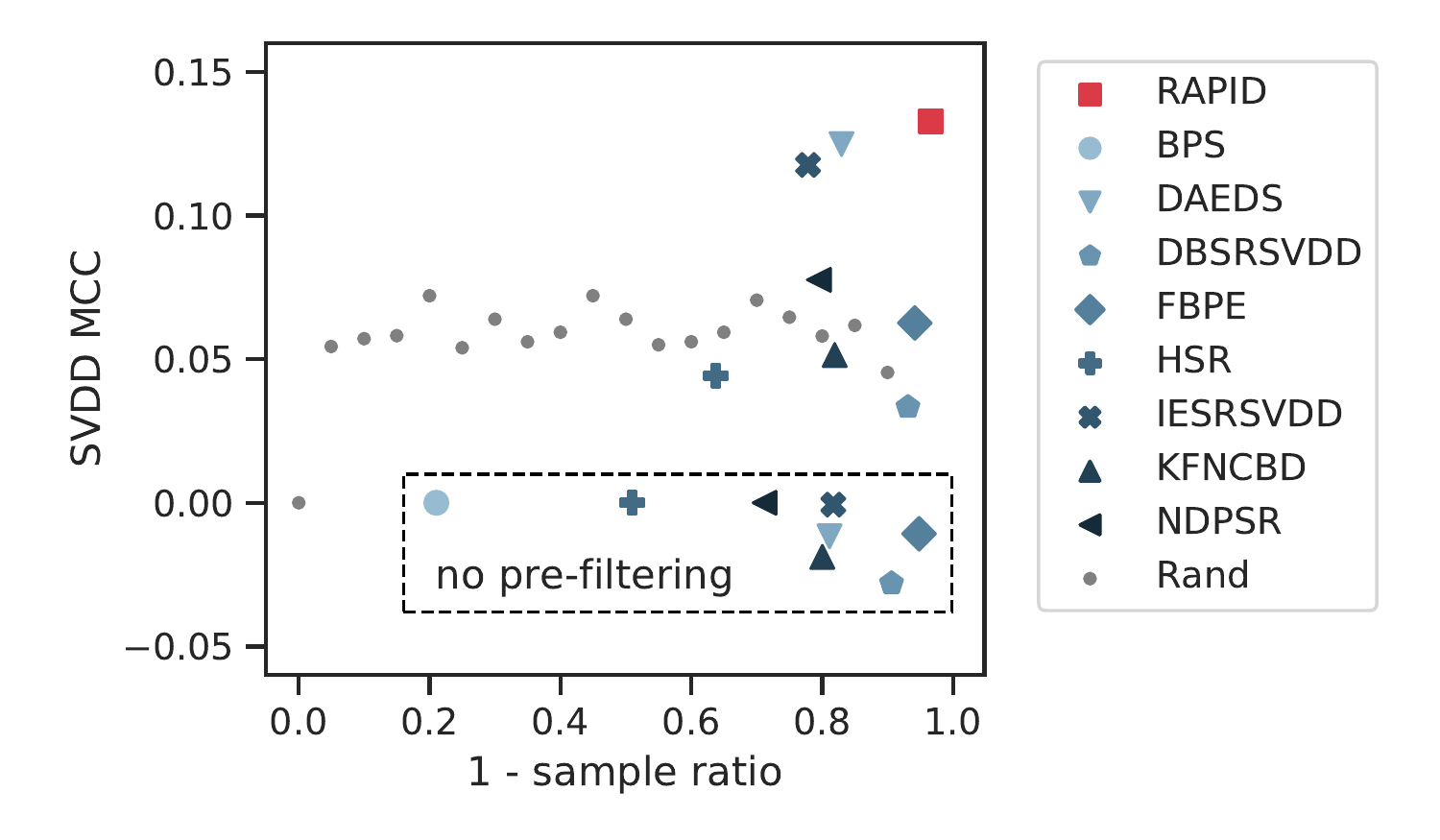}
	\caption{Median MCC and ratio of observations removed by sampling (1 - sample ratio =  $\nicefrac{(N - \vert\mathbf{S}\vert)}{\vert \mathbf{X}\vert}$) over real-world data.\textsuperscript{\ref{fn:remark-results}}
	Rand is shown for different $r \in [0.01, 1.0]$.
	BPS with pre-filtering did not solve for large data sets.}
	\label{fig:eval:real-world}
\end{figure}

\stepcounter{footnote}
\footnotetext{Because of limited space, we report median statistics, but results also hold for mean values and individual comparisons (ranks), see \website.\label{fn:remark-results}}

\emph{Dimensionality}:
The expectation is that the sample quality does not deteriorate with increasing dimensionality.
However, sample sizes may increase slightly.
This is because determining a decision boundary of a high-dimensional manifold requires more observations than of a low-dimensional one.
\autoref{fig:eval:syn:dimensionality} shows the sample quality and size.
For some competitors (HSR, NDPSR, and KFNCBD), sample quality decreases with increasing dimensionality.
This indicates that they do not select observations in all regions.
This in turn leads to misclassification.
Even tuning exogenous parameter values does not mitigate these effects.
As desired, \OurSamplingMethod returns a small sample in all cases, with high classification accuracy.

\emph{Number of Mixture Components}:
Finally, we make the data set more difficult by increasing the number of Gaussian mixture components.
Like before, we expect sample sizes to increase slightly, since the generated manifolds are more difficult to classify.
\autoref{fig:eval:syn:difficulty} shows the sample quality and size.
For HSR and DBSRSVDD, sampling quality fluctuates significantly.
NDPSR and DBSRSVDD do not prune any observation with only one component.
We think that these effects are due to the sensitivity to the exogenous parameters of the various methods.
This is, methods with fluctuating results would require different parameter values for data sets of different difficulties.
However, the competitors do not come with a systematic way to choose parameter values to adapt to varying data set difficulty.
\OurSamplingMethod in turn is very robust to changes in difficulty.
As expected, the sample size increases only slightly with increasing difficulty.
The classification accuracy is close to \mbox{MCC = \num{1.0}}, even for high difficulties.

\smallskip
\textit{
In summary, our experiments on synthetic data reveal that many competitors are sensitive to data size, dimensionality, and complexity.
Different parameter values may mitigate the effects in a few cases, but selecting good values is difficult.
\OurSamplingMethod on the other hand is very robust.
It adapts well to different data characteristics and does not require any parameter tuning.
}

\subsection{Benchmark on Real-World Data}\label{sec:experiment-real-world-results}

Next, we turn to data sets with real distributions and more diverse data characteristics.
The basis for our experiments are \num{21} standard benchmark data sets for outlier detection~\cite{campos_evaluation_2016}.
Campos et al. constructed this benchmark from classification data where one of the classes is downsampled and labeled as outlier.
The data sets have different sizes (\numrange{80}{49534} observations), dimensionality (\numrange{3}{1555} dimensions) and outlier ratios (\SIrange{0.2}{75.38}{\percent}, median \SI{9.12}{\percent}).
Again, we structure our experiments along two questions.

\begin{table}[t]
\caption{Median metrics over real-world data.\textsuperscript{\ref{fn:remark-results}}}
	\label{tbl:eval:add-stats}
	\centering
    \begin{threeparttable}[b]
    \resizebox{0.95\columnwidth}{!}{%
	\begin{tabular}{lrrr|rr|c}
		\toprule
		& \multicolumn{3}{c}{runtimes} & \multicolumn{2}{c}{sample} & quality \\
		{} &  $t_\text{samp}$ &  $t_\text{train}$ & $t_\text{inf}\textsuperscript{$*$}$ & size &   ratio &  MCC \\
		\midrule
		RAPID       &    0.02 &     0.02 &    0.01 &    21 &   0.03 &  0.13 \\
		\midrule
		BPS$^\dag$      &    0.61 &     0.47 &    0.15 &   385 &   0.60 &  \multicolumn{1}{c}{$\dag$} \\
		DAEDS    &    0.59 &     0.04 &    0.07 &    98 &   0.17 &  0.12 \\
		DBSRSVDD &    0.01 &     0.02 &    0.01 &    40 &   0.07 &  0.03 \\
		FBPE     &    0.07 &     0.02 &    0.01 &    39 &   0.06 &  0.06 \\
		HSR      &    0.47 &     0.07 &    0.03 &   130 &   0.36 &  0.04 \\
		IESRSVDD &    0.02 &     0.10 &    0.03 &   154 &   0.22 &  0.12 \\
		KFNCBD   &    0.53 &     0.06 &    0.04 &   100 &   0.18 &  0.05 \\
		NDPSR    &    0.51 &     0.07 &    0.03 &   103 &   0.21 &  0.08 \\
		\bottomrule
	\end{tabular}
	}
	\begin{tablenotes}
	\item $\text{*}$ time for inference in seconds per 1000 observations.
	\item $\dag$ did not solve for large data sets.
	\end{tablenotes}
    \end{threeparttable}
\end{table}

\smallskip
\begin{enumerate}[resume*]
	\item How well do methods adapt to real-world data sets? \label{q:benchmark}
\end{enumerate}
First, we compare \OurSamplingMethod against competitors without any pre-processing.
\autoref{fig:eval:real-world} plots the median sample ratio against the SVDD quality over all data sets.\textsuperscript{\ref{fn:remark-results}}
Good sampling methods return small sample ratios and yield high SVDD quality, i.e., they appear in the upper right corner of the plot.
All of the competitors in their original version, i.e., without pre-filtering, result in poor SVDD quality, much lower than the Rand baselines.
The reason is that they expect all observations to be inliers.

With our \emph{pre-filtering}, SVDD qualities of competitors improve considerably, see \autoref{fig:eval:real-world} and \autoref{tbl:eval:add-stats}.
Still, \OurSamplingMethod outperforms its competitors; none of them produces a sample with higher SVDD quality or smaller sample size than \OurSamplingMethod.
The methods closest to \OurSamplingMethod are DAEDS and IESRSVDD, with similar SVDD quality, but significantly larger sample sizes.
On average, the sample selected by \OurSamplingMethod even yields the same quality as training a SVDD without sampling.\textsuperscript{\ref{fn:original-data}}

\smallskip
\begin{enumerate}[resume*]
	\item What are the runtime benefits of sampling? \label{q:runtimes}
\end{enumerate}
Finally, we look at the impact of sampling on algorithm runtimes, see \autoref{tbl:eval:add-stats}.
We measure the execution runtimes of the sampling method ($t_\text{samp}$), of SVDD training on the sample ($t_\text{train}$), and of the inference ($t_\text{inf}$).
Overall, all methods have reasonable runtimes for sampling, with BPS being the slowest with \SI{0.61}{\second} on average.
However, \OurSamplingMethod is the fastest method overall.
Methods with runtimes similar to \OurSamplingMethod, such as DBSRSVDD, feature significantly lower SVDD quality.
Compared to SVDD applied to large original data sets without sampling, \OurSamplingMethod reduces training times from over one hour to only a few seconds.\textsuperscript{\ref{fn:original-data}}

\smallskip
\textit{
In summary, \OurSamplingMethod outperforms its competitors on real-world data as well.
There is no other method with higher SVDD quality and similarly small sample sizes.
\OurSamplingMethod scales very well to very large data sets and reduces overall runtimes by up to an order of magnitude.
}

\stepcounter{footnote}
\footnotetext{Based on data sets with non-prohibitive runtime, i.e., $N < \num{25000}$, see \website for details.\label{fn:original-data}}
\section{Conclusions}
\label{sec:conclusions}

SVDD does not scale well to large data sets due to long training runtimes.
Therefore, working with a sample instead of the original data has received much attention in the literature.
Various existing sampling approaches guess the support vectors of the original SVDD solution from data characteristics.
These methods are difficult to calibrate because of unintuitive exogenous parameters.
They also tend to perform poorly regarding outlier detection.
One reason is that including support vector candidates in the sample does not guarantee them to indeed become support vectors.

Our article addresses these issues. 
We formalize SVDD sample selection as an optimization problem, where constraints guarantee that SVDD indeed yields the correct decision boundaries.
We achieve this by reducing SVDD to a density-based decision problem, which gives way to rigorous arguments why a sample indeed retains the decision boundary.
To solve this problem effectively, we propose a novel iterative algorithm \OurSamplingMethod.
\OurSamplingMethod does not rely on any parameter tuning beyond the one already required by SVDD.
It is efficient and consistently produces a small high-quality sample.
Experiments show that the way we have framed sampling as an optimization problem improves substantially on existing methods with respect to runtimes, sample sizes, and classification accuracy.

\bibliographystyle{IEEEtranN}
\bibliography{IEEEabrv,ocs}

\end{document}